\newcommand{\env}{\mu}
\newcommand{\policy}{\pi}
\newcommand{\pol}{\policy}
\newcommand{\POMDP}{\env} 
\newcommand{\POMDPnoR}{{POMDP$\setminus$R}} 
\newcommand{\St}{\mathcal{S}}
\newcommand{\Ac}{\mathcal{A}}
\newcommand{\Ob}{\mathcal{O}}
\newcommand{\Pol}{\Pi}
\title{Counterfactual equivalence for POMDPs, and underlying deterministic environments}
\author{
  Stuart Armstrong \\
}
\date{\today}
\author{
  Stuart Armstrong \\
  Future of Humanity Institute \\
  University of Oxford \\
  Machine Intelligence Research Institute \\
  \texttt{stuart@philosophy.ox.ac.uk}
}
\begin{document}


\maketitle

\begin{abstract}%
Partially Observable Markov Decision Processes (POMDPs) are rich environments often used in machine learning.
But the issue of information and causal structures in POMDPs has been relatively little studied.
This paper presents the concepts of equivalent and counterfactually equivalent POMDPs, where agents cannot distinguish which environment they are in though any observations and actions.
It shows that any POMDP is counterfactually equivalent, for any finite number of turns, to a deterministic POMDP with all uncertainty concentrated into the initial state.
This construction has a universality property, in that all such deterministic POMDPs define the same `pure' learning processes.
This allows a better understanding of POMDP uncertainty, information, and learning.
\end{abstract}

\section{Introduction}

Markov decision processes (MDPs) and Partially Observable Markov Decision Processes (POMDPs) \citep{sutton1998reinforcement,kaelbling1998planning} are useful and common tools in machine learning, with artificial agents evolving in these environments, generally seeking to maximise a reward.

But though there has been a lot of work on POMDPs from the practical perspective, there has been relatively little from the theoretical perspective.
This paper aims to partially fill that hole.
It first looks at notions of equivalence in POMDPs: two such structures are equivalent when an agent cannot distinguish which is which from any actions and observations it takes and makes.

A stronger notion is that of counterfactual equivalence; here, multiple agents sharing the same structure cannot distinguish it from another though any combinations of actions and observations.

Given these notions, this paper demonstrates that any POMDP will be counterfactually equivalent to a deterministic POMDP for any number $m$ of interaction terms.
A deterministic POMDP is one who transition and observation functions are deterministic, and hence all the uncertainty is concentrated in the initial state.

Having uncertainty expressed in this way allows one to clarify POMDPs from an information perspective: what can the agent be said to learn as it evolves in the POMDP, what it can change and what it can't.
Since the rest of the POMDP is deterministic, an agent that knows the environment can only gain information about the initial state.

This construction has a universality property, in that all such deterministic POMDPs define the same pure learning processes, where a pure learning process is one that can be decomposed as sums of knowledges about the initial state.

This allows better analysis of the causality in the POMDPs, using concepts that were initially designed for environments with the causal structure more naturally encoded, such as causal graphs \citep{causality}.

\section{Setup and notation}

The reward function in a POMDP is not important, as the focus of this paper is on its causal structure, with the reward just a component of the observation.

Thus define a \emph{partially observable Markov decision process without reward function (\POMDPnoR)}
$\POMDP = (\mathcal{S}, \mathcal{A}, \mathcal{O}, T, O, T_0)$~\citep{Choi11},
which consists of%
\begin{itemize}
\item a finite set of states $\mathcal{S}$,
\item a finite set of actions $\mathcal{A}$,
\item a finite set of observations $\mathcal{O}$,
\item a transition probability distribution $T: \mathcal{S} \times \mathcal{A} \to \Delta\mathcal{S}$ (where $\Delta\mathcal{S}$ is the set of probability distributions on $\mathcal{S}$),
\item a probability distribution $T_0 \in \Delta \mathcal{S}$ over the initial state $s_0$,
\item an observation probability distribution $O: \mathcal{S} \to \Delta\mathcal{O}$.
\end{itemize}
This \POMDPnoR\ will often be referred to as an environment (though \citet{hadfield2017inverse} refers to similar structures as world models).

The agent interacts with the environment in cycles:
initially, the environment is in state $s_0$ (given by $T_0$), and the agent receives observation $o_0$.
At time step $t$, the environment is in state $s_{t-1} \in \mathcal{S}$ and
the agent chooses an action $a_t \in \mathcal{A}$.
Subsequently
the environment transitions to a new state $s_t \in \mathcal{S}$ drawn from the distribution $T(s_t \mid s_{t-1}, a_t)$ and
the agent then receives an observation $o_t \in \mathcal{O}$ drawn from the distribution $O(o_t \mid s_t)$.
The underlying states $s_{t-1}$ and $s_t$ are not directly observed by the agent.

A history $h_t = o_0 a_1 o_1 \allowbreak a_2 o_2 \ldots a_t o_t$ is a sequence of actions and observations.
We denote the set of all observed histories of length $t$ with $\mathcal{H}_t := (\mathcal{A} \times \mathcal{O})^t$, and by $\mathcal{H}$ the set of all histories.

For $t'>t$, let $a_{t:t'}$ be the sequence of actions $a_{t}a_{t+1}\ldots a_{t'}$,
let $o_{t:t'}$ be the sequence of observations $o_{t}o_{t+1}\ldots o_{t'}$, and
let $s_{t:t'}$ the sequence of states $s_{t} s_{t+1}\ldots s_{t'}$.
Write $h_{t}\leq h_{t'}$ if $h_t=h_{t'}$ or if $h_{t'}=h_ta_{t+1}o_{t+1}\ldots a_{t'}o_{t'}$.

The set $\Pol$ is the set of \emph{policies}, functions $\pol: \mathcal{H} \to \Delta\mathcal{A}$ mapping histories to probability distributions over actions. Given a policy $\pol$ and environment $\env$,
we get a probability distribution over histories:
\begin{align*}
\env(o_0 a_1 o_1 \ldots a_t o_t \mid \pol) :=
\sum_{s_{0:t} \in \mathcal{S}^t} T_0(s_0)\prod_{k=1}^t O(o_k \mid s_k) T(s_k \mid s_{k-1}, a_k) \pol(a_k \mid a_1 o_1 \ldots a_{k-1} o_k).
\end{align*}
Since $\env$ gives the probabilities of everything except actions, and $\pol$ gives the probabilities of actions, all conditional probabilities between histories, actions, states, and so on, can be computed using $\env$, $\pol$, and Bayes' rule.
For instance, let $h_t$, $s$, and $\pol$ be such that $\env(h_t | s_0=s,\pol) \neq 0$.
Then by Bayes's rule:
\begin{align*}
\env(s_0=s| h_t,\pol) &= \frac{\env(h_t | s_0=s,\pol)\env(s_0=s|\pol)}{\sum_{s'\in\mathcal{S}}\env(h_t|s_0=s',\pol)\env(s_0=s'|\pol)}.
\end{align*}
Then note that $s_0$ is obviously independent of $\pol$, so this can be rewritten as
\begin{align*}
\env(s_0=s| h_t) &= \frac{\env(h_t | s_0=s,\pol)\env(s_0=s)}{\sum_{s'\in\mathcal{S}}\env(h_t|s_0=s',\pol)\env(s_0=s')},
\end{align*}
which can be computed from $\env$. In the case where there exists no $\pol$ with $\env(h_t | s_0=s,\pol) \neq 0$, set $\env(s_0=s| h_t)$ to $0$.

\section{Equivalence and counterfactual equivalence}

\begin{definition}[Similarity]
The environments $\env$ and $\env^*$ are (observationally) similar if they have the same sets $\mathcal{A}$, and $\mathcal{O}$.
Consequently, they have the same sets of histories $\mathcal{H}$, and hence the same sets of policies $\Pol$.
\end{definition}

\subsection{Equivalence}

We'll say that two environments $\env$ and $\env^*$ are $m$-equivalent if an agent in one cannot figure out which one it is in during the first $m$ turns.

To formalise this:
\begin{definition}[Equivalence]
The environments $\env$ and $\env^*$ are $m$-equivalent if they are similar (and hence have the same sets of histories), and, for all $h_t, h_{t'}\in\mathcal{H}=\mathcal{H}^*$ with $t,t'\leq m$ and all policies $\pol\in\Pol=\Pol^*$,
\begin{align}\label{equiv:eq}
\env(h_{t'} | h_t, \pol) = \env^*(h_{t'} | h_t, \pol). 
\end{align}
If they are $m$-equivalent for all $m$, they are equivalent.
\end{definition}

\subsection{Counterfactual equivalence}
We'll say that two environments $\env$ and $\env^*$ are $m$-counterfactually equivalent if multiple agents sharing the same environment, cannot figure out which one they are in during the first $m$ turns.

This is a bit more tricky to define; in what sense can multiple agents be said to share the same environment?
One idea is that if two agents are in the same state and choose the same action, they will then move together to the same next state (and make the same next observation).

To formalise this, define:
\begin{definition}[Environment policy]
The $\pol_\env$ is a deterministic environment policy of length $m$, if it is triplet $(\widehat{T}_0,\widehat{T},\widehat{O})$, where $\widehat{T}_0\in\St$, $\widehat{O}:\St\times\{0,\ldots,m\}\to\Ob$, and $\widehat{T}: \St\times\Ac\times\{1,\ldots,m\} \to \St$.
Let $\Pol_\env^m$ be the set of all environment policies of length $m$.
\end{definition}

The idea is that $(\widehat{T}_0,\widehat{T},\widehat{O})$ contain all information as to how the stochasticity in $T_0$, $T$, and $O$ are resolved in the environment. The $\widehat{T}_0$ gives a single initial state, $\widehat{T}(s,a,i)=s'$ means that an agent in state $s$ on turn $i$, taking action $a$, will move to state $s'$, and $\widehat{O}(s,i)=o$ means that an agent arriving in state $s$ on turn $i$ will make observation $o$.

The environment $\env$ gives a distribution over elements of $\Pol_\env^m$:
\begin{align}\label{prob:env:policy}
\env(\widehat{T}_0,\widehat{T},\widehat{O}) = T_0(\widehat{T}_0) \cdot\left[ \prod_{s\in\St,a\in\Ac,1\leq i\leq m} T\left(\widehat{T}(s,a,i) \middle| s, a\right) \prod_{s\in\St,0\leq i\leq m} O\left(\widehat{O}(s,i)\middle|s\right)\right].
\end{align}

For the first $m$ turns of interaction with the environment, the agent can either see itself as updating using $T_0$, $T$, and $O$, or it can see itself as following a deterministic environment policy $\pol_\env$, chosen according to the above probability.

Given an environment policy and an actual policy, the probability of a certain history can be computed. If $\pol\in\Pol$ is deterministic, $\env(h_t | \pol_\env, \pol)$ will be always either $1$ or $0$, since $\pol_\env$ and $\pol$ deterministically determine all the states, observations, and actions.

Using $\env$ and Bayes's rule, this conditional probability can be inverted to compute $\env(\pol_\env| h_t, \pol)$, which is $\env(\pol_\env| h_t)$ since $\pol_\env$ and $\pol$ are independent of each other.

So this gives a formalisation of what it means to have several agents sharing the same environment: they share an environment policy.
\begin{definition}[Counterfactual equivalence]
The environments $\env$ and $\env^*$ are $m$-counterfactually equivalent if they are similar, and if for any collection $(h_{t_i},\pol_i)_{i\leq n}$ of pairs of histories and policies with $t_i\leq m$,
\begin{align}\label{con:equiv:eq}
\sum_{\pol_\env \in \Pol^m_\env} \env(\pol_\env) \prod_{i\leq n} \env(h_{t_i}|\pol_\env,\pol_i)
=
\sum_{\pol_{\env^*} \in \Pol^m_{\env^*}} \env^*(\pol_{\env}^*) \prod_{i\leq n} \env^*(h_{t_i}|\pol_{\env^*},\pol_i)
\end{align}
If they are $m$-counterfactually equivalent for all $m$, they are counterfactually equivalent.
\end{definition}
The terms in \autoref{con:equiv:eq} are the joint probabilities of $n$ agents, using policies $\pol_i$ and sharing the same environment policy, each seeing the histories $h_{t_i}$.

And finally:
\begin{definition}
If $\env$ and $\env^*$ are $m$-equivalent (or $m$-counterfactually equivalent) for all $m$, they are equivalent (or counterfactually equivalent).
\end{definition}

A useful result is:
\begin{proposition}
If the environments $\env$ and $\env^*$ are $m$-counterfactually equivalent, then they are $m$-equivalent.
\end{proposition}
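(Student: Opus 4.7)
The plan is to recognise $m$-equivalence as the $n=1$ instance of $m$-counterfactual equivalence, together with Bayes' rule.

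First I would establish the total-probability identity
\begin{align*}
\env(h_t \mid \pol) = \sum_{\pol_\env \in \Pol_\env^m} \env(\pol_\env)\, \env(h_t \mid \pol_\env, \pol)
\end{align*}
for every $t \leq m$, every $h_t$, and every $\pol$. This is the formal version of the informal remark, given just before the counterfactual equivalence definition, that sampling an environment policy and then running $\pol$ against it reproduces the distribution of $\pol$ run directly in $\env$. A direct verification expands $\env(\pol_\env)$ using \autoref{prob:env:policy} and $\env(h_t \mid \pol_\env, \pol)$ using the deterministic dynamics encoded by $\pol_\env$: for a fixed $h_t = o_0 a_1 o_1 \ldots a_t o_t$, the environment policies with nonzero contribution are exactly those whose induced trajectory $s_0 = \widehat T_0$, $s_k = \widehat T(s_{k-1}, a_k, k)$ produces observations $\widehat O(s_k, k) = o_k$. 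Arguments of $\widehat T$ and $\widehat O$ off this trajectory are unconstrained, and the corresponding factors in $\env(\pol_\env)$ each marginalise to $1$; what remains collapses to exactly the formula for $\env(h_t \mid \pol)$ given in Section~2.

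Specialising \autoref{con:equiv:eq} to $n=1$ with a single pair $(h_t, \pol)$ and applying this identity on both sides then yields the marginal equality $\env(h_t \mid \pol) = \env^*(h_t \mid \pol)$ for every $t \leq m$ and every $\pol$. To pass from this to the conditional form demanded by $m$-equivalence, I would split on the prefix order $\leq$ on histories for $t, t' \leq m$. Assume $\env(h_t \mid \pol) > 0$, else both sides of the target identity are $0$ by the convention at the end of Section~2 (and $\env^*(h_t \mid \pol) = 0$ too by the marginal equality). If $h_t \leq h_{t'}$, then $\env(h_{t'} \mid h_t, \pol) = \env(h_{t'} \mid \pol)/\env(h_t \mid \pol)$ and the marginal equality does the job; if $h_{t'} \leq h_t$, the conditional is $1$ in both environments; if neither is a prefix of the other, no trajectory realises both histories, so the conditional is $0$ in both.

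The only non-routine step is the first one, where the combinatorics of which arguments of $(\widehat T_0, \widehat T, \widehat O)$ are pinned down by the trajectory of $h_t$ and which sum to $1$ must be tracked carefully. Everything else is just the observation that equivalence is the $n=1$ instance of counterfactual equivalence, plus Bayes.
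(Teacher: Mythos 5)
Your proposal is correct and takes essentially the same route as the paper's proof: specialise \autoref{con:equiv:eq} to $n=1$ to obtain $\env(h_t\mid\pol)=\env^*(h_t\mid\pol)$ for all $t\leq m$, then recover the conditional probabilities of $m$-equivalence via Bayes and a case split on the prefix relation. The only difference is one of care, in your favour: you explicitly verify the total-probability identity $\env(h_t\mid\pol)=\sum_{\pol_\env\in\Pol_\env^m}\env(\pol_\env)\,\env(h_t\mid\pol_\env,\pol)$, which the paper leaves implicit, and your three-way prefix case analysis correctly assigns the value $1$ (rather than $0$) when $h_{t'}\leq h_t$.
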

\begin{proof}
If $h_t \nleq h_{t'}$, then
\begin{align*}
\env(h_{t'} | h_t, \pol) = 0 = \env^*(h_{t'} | h_t, \pol),
\end{align*}
since $h_{t'}$ is impossible, given $h_t$.

If $h_t \leq h_{t'}$, then
\begin{align*}
\env(h_{t'} | h_t, \pol) = \frac{\env(h_{t'},h_t | \pol)}{\env(h_t|\pol)} = \frac{\env(h_{t'} | \pol)}{\env(h_t|\pol)}.
\end{align*}

For the counterfactually equivalent $\env$ and $\env^*$, the case of $n=1$, $(h_t|\pol)$, demonstrates that $\env(h_{t}|\pol)=\env^*(h_{t}|\pol)$.
The same argument shows $\env(h_{t'}|\pol)=\env^*(h_{t'}|\pol)$, demonstrating $\env(h_{t'} | h_t, \pol) = \env^*(h_{t'} | h_t, \pol)$ and establishing \autoref{equiv:eq}.

\end{proof}

\section{Examples}\label{examples}

Consider the $\env$ of \autoref{env:mu}.
This has $\St=\{s_0,s^{00},s^{01},s^{10},s^{11}\}$, $\Ac=\{a^0,a^1\}$, and $\Ob=\St$. Since the observations and states are the same, with trivial $O$, this is actually a Markov Decision process \citep{sutton1998reinforcement}.
The agent starts in $s_0$, chooses between two actions, an each action leads separately to one of two outcomes, with equal probability.

\begin{figure}[htbp]
\begin{center}
\begin{tikzpicture}[node distance=4cm,
  thick,main node/.style={circle,fill=blue!20,draw,font=\sffamily\small},action node/.style={circle,fill=red!20,draw,font=\sffamily\small}]

  \node[main node] (1) {$s_0$};
  \node[action node] at (1.5,0.75) (2) {$a^{0}$};
  \node[action node] at (1.5,-0.75) (3)  {$a^1$};
  \node[main node] at (3.5,2.25) (4) {$s^{00}$};
  \node[main node] at (3.5,0.75) (5) {$s^{01}$};
  \node[main node] at (3.5,-0.75) (6) {$s^{10}$};
  \node[main node] at (3.5,-2.25) (7) {$s^{11}$};

  \path[every node/.style={font=\sffamily\small}]
	(1) edge[->] (2)
    	edge[->] (3)
    (2) edge[->] node [above left] {$1/2$} (4)
    	edge[->] node [below] {$1/2$} (5)
    (3) edge[->] node [above] {$1/2$} (6)
    	edge[->] node [below left] {$1/2$} (7)
    ;
\end{tikzpicture}
\end{center}
\caption{Environment $\env$: two choices, four outcomes}
\label{env:mu}
\end{figure}
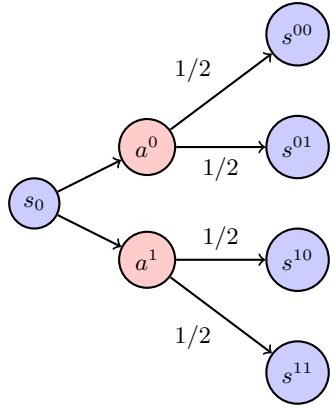

Compare with $\env'$ of \autoref{env:mu:p}.
The actions and observations are the same (hence the two environments are similar), but the state set is larger.
Instead of one initial state, there are two, $s_0^0$ and $s_0^1$, leading to the same observation $o_0$.
These two states are equally likely under $T_0$, and lead deterministically to different states if the agent chooses $a^0$.

It's not hard to see that $\env$ and $\env'$ are counterfactually equivalent.
The environment $\env'$ has just shifted the uncertainty about the result of $a^0$, out of $T$ and into the initial distribution $T_0$.

\begin{figure}[htbp]
\begin{center}
\begin{tikzpicture}[node distance=4cm,
  thick,main node/.style={circle,fill=blue!20,draw,font=\sffamily\small},action node/.style={circle,fill=red!20,draw,font=\sffamily\small}]

  \node[main node, label={[shift={(-0.3,0.0)}]left:$T_0(s^0_0)=1/2$}] at (0,0.75) (0) {$s_0^0$};
  \node[main node, label={[shift={(-0.3,0.0)}]left:$T_0(s^1_0)=1/2$}] at (0,-0.75) (1) {$s_0^1$};
  \node[action node] at (1.5,0.75) (3) {$a^0$};
  \node[action node] at (1.5,2.25) (2) {$a^0$};
  \node[action node] at (1.5,-0.75) (2r)  {$a^1$};
  \node[action node] at (1.5,-2.25) (3r)  {$a^1$};
  \node[main node] at (3.5,2.25) (4) {$s^{00}$};
  \node[main node] at (3.5,0.75) (5) {$s^{01}$};
  \node[main node] at (3.5,-0.75) (6) {$s^{10}$};
  \node[main node] at (3.5,-2.25) (7) {$s^{11}$};

  \node[draw,dotted,fit=(0) (1), label=above:{$\mathbf{o_0}$}] {};

  \path[every node/.style={font=\sffamily\small}]
	(0) edge[->] (2)
    	edge[->] (2r)
	(1) edge[->] (3)
    	edge[->] (3r)
    (2) edge[->] node [above] {$1$} (4)
    (2r) edge[->] node [above] {$1/2$} (6)
    	 edge[->] node [above] {$1/2$} (7)
    (3) edge[->] node [below] {$1$} (5)
    (3r) edge[->] node [below] {$1/2$} (7)
    	 edge[->] node [below] {$1/2$} (6)
    ;
\end{tikzpicture}
\end{center}
\caption{Environment $\env'$: two initial states, two choices, four outcomes.}
\label{env:mu:p}
\end{figure}
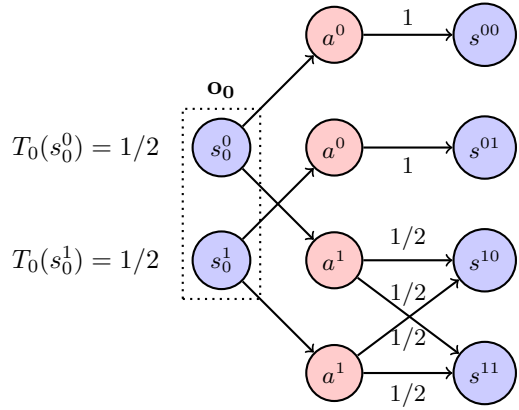

Contrast both of these with the environment $\env''$ of \autoref{env:mu:pp}, which has a the same $\St$, $\Ac$, $\Ob$, $T_0$, and $O$ as $\env'$, but different behaviour under action $a^1$ (hence a different $T$).

It's not hard to see that all three environments are equivalent: given history $o_0a^i$, the agent is equally likely to end up in state $s^{i0}$ and $s^{i1}$, and that's the end of the process.

They are not, however, counterfactually equivalent.
There are four environment policies in $\env$ (and in $\env'$) of non-zero probability.
They can be labeled $\pol_{ij}$, which sends $a^0$ to $s^{0i}$ and $a^1$ to $s^{1j}$.
Each one has probability $1/4$.

There are two environment policies in $\env''$ of non-zero probability; they can be labeled $\pol_i$, which simply chooses the starting state $s_0^i$.
Each one has probability $1/2$.

Since there are only two actions and they are only used once, the policies of these environments can be labeled by that action.

Then consider the two pairs of policies and histories $(a^0, o_0a^0o^{00})$ and $(a^1, o_0a^1o^{11})$.
Under the environment policy $\pol_{01}$, both these pairs are certainly possible, so they have an non-zero probability under $\env$ (and $\env'$).
However, $(a^0, o_0a^0o^{00})$ is impossible under $\pol_1$, while $(a^1, o_0a^1o^{11})$ is impossible under $\pol_0$.
So there are no environment policies in $\env''$ that make both those histories possible.
Thus $\env''$ is not counterfactually equivalent to $\env$ and $\env'$.

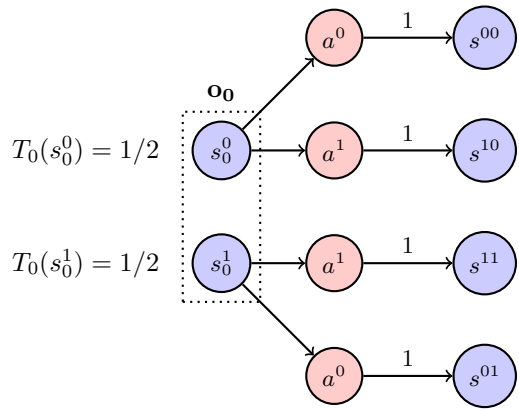
\begin{figure}[htbp]
\begin{center}
\begin{tikzpicture}[node distance=4cm,
  thick,main node/.style={circle,fill=blue!20,draw,font=\sffamily\small},action node/.style={circle,fill=red!20,draw,font=\sffamily\small}]

  \node[main node, label={[shift={(-0.3,0.0)}]left:$T_0(s^0_0)=1/2$}] at (0,0.75) (0) {$s_0^0$};
  \node[main node, label={[shift={(-0.3,0.0)}]left:$T_0(s^1_0)=1/2$}] at (0,-0.75) (1) {$s_0^1$};
  \node[action node] at (1.5,2.25) (a00) {$a^0$};
  \node[action node] at (1.5,0.75) (a10) {$a^1$};
  \node[action node] at (1.5,-0.75) (a11)  {$a^1$};
  \node[action node] at (1.5,-2.25) (a01)  {$a^0$};
  \node[main node] at (3.5,2.25) (s00) {$s^{00}$};
  \node[main node] at (3.5,0.75) (s10) {$s^{10}$};
  \node[main node] at (3.5,-0.75) (s11) {$s^{11}$};
  \node[main node] at (3.5,-2.25) (s01) {$s^{01}$};

  \node[draw,dotted,fit=(0) (1), label=above:{$\mathbf{o_0}$}] {};

  \path[every node/.style={font=\sffamily\small}]
	(0) edge[->] (a00)
    	edge[->] (a10)
	(1) edge[->] (a11)
    	edge[->] (a01)
    (a00) edge[->] node [above] {$1$} (s00)
    (a10) edge[->] node [above] {$1$} (s10)
    (a11) edge[->] node [above] {$1$} (s11)
    (a01) edge[->] node [above] {$1$} (s01)
    ;
\end{tikzpicture}
\end{center}
\caption{Environment $\env''$: two initial states, two choices, four counterfactually correlated outcomes.}
\label{env:mu:pp}
\end{figure}

\section{Underlying deterministic environment}

In this section, the environment $\env^*$ is assumed to have all its special features indicated by a $*$ -- so it will have state space $\St^*$, transitions function $T_0^*$, and so on.

The main result is:
\begin{theorem}\label{determine:env}
For $m$ and all environments $\env$, there exists an environment $\env^*$ that is $m$-counterfactually equivalent to $\env$, and on which the transition function $T^*$, and the observation function $O^*$, are both deterministic.
\end{theorem}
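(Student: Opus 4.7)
The plan is to construct $\env^*$ by ``unrolling'' all the stochasticity of $\env$ into the initial state distribution, while carrying enough bookkeeping in the state to deterministically dictate the subsequent dynamics. Concretely, I would take the state space $\St^* = \Pol_\env^m \times \St \times \{0, 1, \ldots, m\}$, with the same action set $\Ac$ and observation set $\Ob$ as $\env$. The initial distribution $T_0^*$ places probability $\env(\pol_\env)$, as given by \autoref{prob:env:policy}, on the state $(\pol_\env, \widehat{T}_0, 0)$, where $(\widehat{T}_0, \widehat{T}, \widehat{O}) = \pol_\env$. The transition function is then forced: $T^*((\pol_\env, s, i), a) = (\pol_\env, \widehat{T}(s, a, i+1), i+1)$ for $i < m$, and the observation function is $O^*((\pol_\env, s, i)) = \widehat{O}(s, i)$. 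Both are deterministic by construction; for $i \geq m$ we can extend arbitrarily since counterfactual equivalence is only claimed up to turn $m$.

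Next I would establish a probability-preserving correspondence between $\Pol_{\env^*}^m$ and $\Pol_\env^m$. Because $T^*$ and $O^*$ are deterministic, an environment policy of $\env^*$ is entirely determined by its initial state $(\pol_\env, \widehat{T}_0, 0)$, which carries a canonical $\pol_\env \in \Pol_\env^m$. Under this correspondence, $\env^*(\pol_{\env^*}) = T_0^*((\pol_\env, \widehat{T}_0, 0)) = \env(\pol_\env)$, so the weighting in the left- and right-hand sums of \autoref{con:equiv:eq} agree term by term. It then suffices to check, for each $\pol_\env$ and each policy $\pol_i$, that $\env(h_{t_i} \mid \pol_\env, \pol_i) = \env^*(h_{t_i} \mid \pol_{\env^*}, \pol_i)$; given a deterministic policy these probabilities are each $0$ or $1$, and match because the $\St$-component of the trajectory in $\env^*$ evolves via $\widehat{T}$ and emits $\widehat{O}$-observations, exactly mirroring the trajectory in $\env$ under $\pol_\env$. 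For stochastic $\pol_i$ the equality extends by linearity, since the action-probability factors $\pol_i(a_k \mid \cdot)$ are determined by the observed history alone and appear identically on both sides.

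The main obstacle is purely bookkeeping: threading the index $i$ and the environment-policy component $\pol_\env$ through the state so that the deterministic $T^*$, $O^*$ in $\env^*$ reproduce the joint distribution \autoref{prob:env:policy} when restricted to the first $m$ turns. Once the state-space construction is set up correctly, the correspondence $\pol_{\env^*} \leftrightarrow \pol_\env$ and the matching of the conditional history probabilities are immediate, and substituting into \autoref{con:equiv:eq} gives $m$-counterfactual equivalence. A minor subtlety is that $\Pol_\env^m$ may be huge, but the theorem only asserts existence of a finite $\env^*$, and $|\St^*|$ remains finite since $\St$, $\Ac$, $\Ob$ and $m$ are.
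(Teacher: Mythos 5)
Your proposal is correct and follows essentially the same route as the paper's own proof: the same product state space $\St\times\Pol_\env^m\times\{0,\ldots,m\}$ (up to reordering of components), the same concentration of all stochasticity into $T_0^*$ with mass $\env(\pol_\env)$ on the state encoding $\pol_\env$, and the same probability-preserving correspondence between the nonzero-probability environment policies of $\env^*$ and those of $\env$, substituted into \autoref{con:equiv:eq}. The only deviations are cosmetic --- your transition indexing $\widehat{T}(s,a,i+1)$ actually respects the declared domain $\{1,\ldots,m\}$ more cleanly than the paper's $\widehat{T}(s,a,i)$, and your claim that an environment policy of $\env^*$ ``is entirely determined by its initial state'' should be qualified to those of nonzero probability, which is exactly the paper's set $Q^*$.
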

\begin{proof}
Let $\Ac^*=\Ac$ and $\Ob^*=\Ob$, so $\env$ and $\env^*$ are similar.

Define $\mathcal{S}^*=\mathcal{S}\times\Pol_\env^m\times\{0,\ldots,m\}$.

Recall that any $\pol_\env\in\Pi_\env^m$ decomposes as $(\widehat{T}_0,\widehat{T},\widehat{O})$.
The deterministic $O^*$ is defined as sending the state $(s,\pol_\env,i)$ to $\widehat{O}(s,i)$.
The deterministic $T^*$ is defined as mapping $(s,\pol_\env,i)$ and the action $a$ to $(\widehat{T}(s,a,i),\pol_\env,i+1)$.
For the rest of the proof, we'll see $T^*$ and $O^*$ as functions, mapping into $\St^*$ and $\Ob^*=\Ob$.

The initial distribution $T_0^*(s,\pol_\env,i)$ is $\env(\pol_\env)$ if $s=\widehat{T}_0$ and $i=0$, and is $0$ otherwise.
This defines $\env^*$.

We now need to show that $\env$ and $\env^*$ are $m$-counterfactually equivalent.
The proof is not conceptually difficult, one just has to pay careful attention to the notation.

Let $Q^*\subset\Pol^m_{\env^*}$ be defined as the elements $\pol_{\env^*}$ of the form\footnote{
Ignoring the extra variable: for all $i$, $T^*(s^*,a,i):=T^*(s^*,a)$ and $O^*(s^*,i):=O^*(s^*)$.
} $((\widehat{T}_0,\pol_\env,0),T^*,O^*)$, for $\widehat{T}_0$ given by $\pol_\env$.
Let $f$ be the (bijective) map taking such $\pol_{\env^*}$ to the corresponding $\pol_\env$.

Since $T^*$ and $O^*$ are deterministic, \autoref{prob:env:policy} and the definition of $T_0^*$ imply that $\env^*(\pol_{\env^*})=0$ if $\pol_{\env^*}\notin Q^*$.
Again by the definition of $T_0^*$:
\begin{align}\label{eq:one}
\env^*(\pol_{\env^*}) = \env(f(\pol_{\env^*})).
\end{align}

Then note that $T^*$ preserves the middle component of $(s,\pol_\env,i)$.
Given a state $(s,\pol_\env,i)$ and an action $a$, the next state and observation in $\env^*$ will be given by $(\widehat{T}(s,a,i),\pol_\env,i+1)$ and $\widehat{O}(s,i+1)$.
Similarly, given the state $s$, environmental policy $\pol_\env$, and action $a$, the next state and observation in $\env$ will be given by $\widehat{T}(s,a,i)$ and $\widehat{O}(s,i+1)$.

So an agent in $\env^*$, starting in $(\widehat{T}_0,\pol_\env,0)$, and an agent in $\env$, having environment policy $\pol_\env$ (and hence starting in $\widehat{T}_0$), would, if they chose the same actions, see the same observations.
Now, `starting in $(\widehat{T}_0,\pol_\env,0)$' can be rephrased as `having environment policy $f^{-1}(\pol_\env)$'.
Since the policies of the agent are dependent on actions and observations only, this means that for all $h_t\in\mathcal{H}$ and $\pol\in\Pol$:
\begin{align}\label{eq:two}
\env^*(h_t | \pol_{\env^*}, \pol) = \env(h_t | f(\pol_{\env^*}), \pol).
\end{align}

Together, \autoref{eq:one} and \autoref{eq:two} give the desired equality of \autoref{con:equiv:eq}: for collections $(h_{t_i},\pi_i)_{i\leq n}$ of history-policy pairs with $t_i\leq m$,
\begin{align*}
\sum_{\pol_{\env^*} \in \Pol^m_{\env^*}} \env^*(\pol_{\env}^*) \prod_{i\leq n} \env^*(h_{t_i}|\pol_{\env^*},\pol_i)
&= \sum_{\pol_{\env^*} \in Q^*} \env^*(\pol_{\env^*}) \prod_{i\leq n} \env^*(h_{t_i}|\pol_{\env^*},\pol_i) \\
&= \sum_{\pol_{\env^*} \in Q^*} \env(f(\pol_{\env^*})) \prod_{i\leq n} \env(h_{t_i}|f(\pol_{\env^*}),\pol_i) \\
&= \sum_{\pol_{\env} \in \Pol^m_{\env}} \env(\pol_{\env}) \prod_{i\leq n} \env(h_{t_i}|\pol_{\env},\pol_i),
\end{align*}
since $f$ is a surjection onto $\Pol^m_{\env}$.

\end{proof}
In the above construction, all the uncertainty and stochasticity of the initial $\env$ has been concentrated into the distribution $T_0^*$ over the initial state $s_0^*$ in $\env^*$.

Note that though the construction will work for every $m$, the size of $\mathcal{S}^*$ increases with $m$, so the limit of this $\env^*$ as $m\to\infty$ has a countable infinite number of states, rather than a finite number.

\subsection{`Universality' of the underlying deterministic environment}

For many $\env$, much simpler constructions are possible.
See for instance environment $\env^*$ of \autoref{env:mu:s}.
It is deterministic in $O^*$ and $T^*$, and counterfactually equivalent to $\env$ and $\env'$ in \autoref{examples}.
But $\env$ has $5$ states and $10$ state-action pairs, so there are $5\times 5^{10}$ different environment policies\footnote{
Though only $4$ of non-zero probability.
}, meaning that $\mathcal{S}\times\Pol_\env^1\times\{0,1\}$ is of magnitude $5^{12}\times 2 = 488281250$, much larger than the $8$ states of $\env^*$.

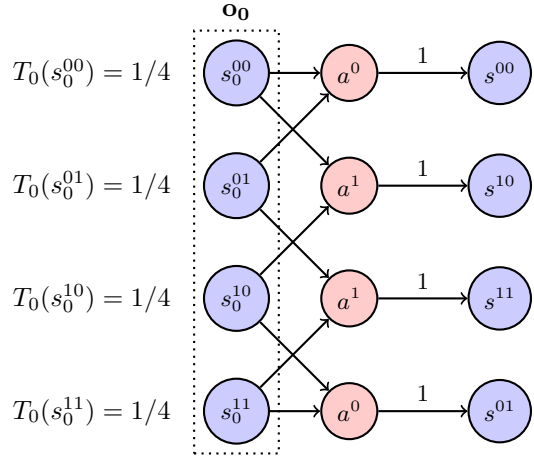
\begin{figure}[htbp]
\begin{center}
\begin{tikzpicture}[node distance=4cm,
  thick,main node/.style={circle,fill=blue!20,draw,font=\sffamily\small},action node/.style={circle,fill=red!20,draw,font=\sffamily\small}]

  \node[main node, label={[shift={(-0.3,0.0)}]left:$T_0(s_0^{00})=1/4$}] at (0,2.25) (00) {$s_0^{00}$};
  \node[main node, label={[shift={(-0.3,0.0)}]left:$T_0(s_0^{01})=1/4$}] at (0,0.75) (01) {$s_0^{01}$};
  \node[main node, label={[shift={(-0.3,0.0)}]left:$T_0(s_0^{10})=1/4$}] at (0,-0.75) (10) {$s_0^{10}$};
  \node[main node, label={[shift={(-0.3,0.0)}]left:$T_0(s_0^{11})=1/4$}] at (0,-2.25) (11) {$s_0^{11}$};
  \node[action node] at (1.5,2.25) (a00) {$a^0$};
  \node[action node] at (1.5,0.75) (a10) {$a^1$};
  \node[action node] at (1.5,-0.75) (a11)  {$a^1$};
  \node[action node] at (1.5,-2.25) (a01)  {$a^0$};
  \node[main node] at (3.5,2.25) (s00) {$s^{00}$};
  \node[main node] at (3.5,0.75) (s10) {$s^{10}$};
  \node[main node] at (3.5,-0.75) (s11) {$s^{11}$};
  \node[main node] at (3.5,-2.25) (s01) {$s^{01}$};

  \node[draw,dotted,fit=(00) (11), label=above:{$\mathbf{o_0}$}] {};

  \path[every node/.style={font=\sffamily\small}]
	(00) edge[->] (a00)
    	edge[->] (a10)
	(01) edge[->] (a00)
    	edge[->] (a11)
	(10) edge[->] (a01)
    	edge[->] (a10)
    (11) edge[->] (a01)
    	edge[->] (a11)
    (a00) edge[->] node [above] {$1$} (s00)
    (a01) edge[->] node [above] {$1$} (s01)
    (a10) edge[->] node [above] {$1$} (s10)
    (a11) edge[->] node [above] {$1$} (s11)
    ;
\end{tikzpicture}
\end{center}
\caption{Environment $\env^*$: four initial states, two choices, four outcomes.}
\label{env:mu:s}
\end{figure}

This poses the question, as to which deterministic POMDP is preferable to model the initial POMDP.
Fortunately, there is a level at which all counterfactually equivalent deterministic POMDPs are the same.

\begin{definition}[Pure learning process]
On $\env$, let $P: \mathcal{H}_{\leq m}\to[0,1]$ be a map from histories of length $m$ or less, to the unit interval.
Then $P$ is a pure learning process if there exists a deterministic $\env^*$, $m$-counterfactually equivalent to $\env$, such that $P$ can be expressed as
\begin{align}\label{univ:eq}
P(h_t)=\sum_{s\in\St^*} p_{s^*}\env^*(s_0^*=s^*|h_t),
\end{align}
for constants $p_{s^*}\in[0,1]$.
\end{definition}

These pure learning processes are seen to compute a probability over the stochastic elements of the environment.
Then the universality result is:
\begin{theorem}
Let $P$ be a pure learning process on $\env$, and let $\env^*$ be deterministic and $m$-counterfactually equivalent to $\env$.
Then there exists constants $p_{s^*}$ for $s\in\St$, such that $P$ can be defined as in \autoref{univ:eq}.
\end{theorem}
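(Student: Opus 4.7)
The plan is to exploit the deterministic structure of $\env^*$ to rewrite the posterior $\env^*(s_0^*=s^*\mid h_t)$ in terms of ``trajectory classes'' of initial states, to show that the distribution over such classes is the same in any two deterministic $m$-counterfactually equivalent environments, and then to redistribute the weights $\{p_{s^{**}}\}$ from a witnessing deterministic environment $\env^{**}$ onto $\env^*$.

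Fix a deterministic $\env^{**}$ and constants $\{p_{s^{**}}\}$ witnessing $P$ as pure. Since $m$-counterfactual equivalence is reflexive, symmetric, and transitive (it is an equality of joint distributions), $\env^*$ and $\env^{**}$ are $m$-counterfactually equivalent to each other. In a deterministic environment, environment policies of non-zero probability are in bijection with initial states of positive $T_0$-mass: for such $s$, let $\phi_s\from\bigcup_{t\leq m}\Ac^t\to\bigcup_{t\leq m}\Ob^{t+1}$ be the \emph{trajectory function} sending an action sequence $a_{1:t}$ to the observations produced by running $s$ against the deterministic policy $\pol_{a_{1:t}}$ that plays $a_i$ on turn $i$. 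Write $[\phi]^*=\{s^*:\phi_{s^*}=\phi\}$, $T_0^*(\phi)=\sum_{s^*\in[\phi]^*}T_0^*(s^*)$, and define $[\phi]^{**}$, $T_0^{**}(\phi)$ analogously. Then $\env^*(s_0^*=s^*\mid h_t)=T_0^*(s^*)/\env^*(h_t\mid\pol_{a_{1:t}})$ when $\phi_{s^*}(a_{1:t})=o_{0:t}$, and is $0$ otherwise.

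The key step, and the main obstacle, is to show $T_0^*(\phi)=T_0^{**}(\phi)$ for every $\phi$. Apply \autoref{con:equiv:eq} to the multi-agent family $\{(h_m^{\phi,a_{1:m}},\pol_{a_{1:m}}):a_{1:m}\in\Ac^m\}$, where $h_m^{\phi,a_{1:m}}$ is the unique history with actions $a_{1:m}$ and observations $\phi(a_{1:m})$. In a deterministic environment, the factor $\env^\bullet(h_m^{\phi,a_{1:m}}\mid\pol_{\env^\bullet},\pol_{a_{1:m}})$ is $1$ iff the initial state of $\pol_{\env^\bullet}$ satisfies $\phi_s(a_{1:m})=\phi(a_{1:m})$. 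Since the observation at turn $t$ is a function of the initial state and the actions played up to turn $t$, agreement on every $a_{1:m}\in\Ac^m$ implies agreement on every shorter action sequence (which is a prefix); so the product over $\Ac^m$ isolates exactly $[\phi]^\bullet$, and each side of \autoref{con:equiv:eq} reduces to $T_0^\bullet(\phi)$. Note that single-agent $m$-equivalence only pins down the marginals of $\phi(a_{1:m})$ for each fixed $a_{1:m}$; it is precisely the multi-agent nature of counterfactual equivalence that pins down the joint law of $\phi$ and thus individual class masses.

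Finally, define $p_{s^*}=W^{**}(\phi_{s^*})/T_0^*(\phi_{s^*})$ whenever $T_0^*(s^*)>0$, and $p_{s^*}=0$ otherwise, where $W^{**}(\phi)=\sum_{s^{**}\in[\phi]^{**}}T_0^{**}(s^{**})p_{s^{**}}$. Since $p_{s^{**}}\in[0,1]$ gives $W^{**}(\phi)\leq T_0^{**}(\phi)=T_0^*(\phi)$, we have $p_{s^*}\in[0,1]$. By construction $\sum_{s^*\in[\phi]^*}T_0^*(s^*)p_{s^*}=W^{**}(\phi)$ for every $\phi$, and combining this with the earlier Proposition's equality $\env^*(h_t\mid\pol)=\env^{**}(h_t\mid\pol)$ yields $\sum_{s^*}p_{s^*}\env^*(s_0^*=s^*\mid h_t)=\sum_{s^{**}}p_{s^{**}}\env^{**}(s_0^{**}=s^{**}\mid h_t)=P(h_t)$, as required.
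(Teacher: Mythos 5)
Your proof is correct and follows essentially the same route as the paper: reduce via transitivity to comparing two deterministic environments, partition initial states into classes by their trajectory functions, apply the multi-agent condition of \autoref{con:equiv:eq} with a family of history--policy pairs to show corresponding classes carry equal $T_0$-mass (the paper's \autoref{equality:eq}), define the new $p_{s^*}$ as class-weighted averages, and verify equality of the resulting posteriors. The only difference is cosmetic: you isolate classes using the open-loop policies indexed by $\Ac^m$ rather than the paper's collection of all deterministic policies, which is a slightly leaner family achieving the same effect.
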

\begin{proof}
Since $P$ is a pure learning process, we already know that there exists a deterministic environment, $m$-counterfactually equivalent to $\env$, where $P$ decomposes as \autoref{univ:eq}.
Since being $m$-counferfactually equivalent is a transitive property, we may as well assume that $\env$ itself is this environment.

We now need to define the $p_{s^*}$ on $\env^*$, and show they generate the same $P$.

Let $\Pol_0\subset\Pol$ be the set of deterministic policies.
Since $\env$ is deterministic itself, apart from $T_0$, a choice of $s_0$ and a choice of $\pol\in\Pol_0$ determines a unique history $h_m$ of length $m$.
Therefore each $s\in \St$ defines a map $f_{s}:\Pol_0 \to \mathcal{H}_m$.
Define the subset $\mathcal{F}(f_s)$ as the set of all $s'$ such that $f_s=f_{s'}$; these subsets form a partition of $\mathcal{S}$.

Since $\env^*$ is also deterministic, its state space $\St^*$ has a similar partition.

Given an $f_s$, define the collection of pairs $(f_s(\pol_i),\pol_i)_{\pol_i\in\Pol_0}$.
For a deterministic environment, an environment policy of non-zero probability is just a choice of initial state.
So, writing $\env(s)$ for $\env(s_0=s)=T_0(s)$, \autoref{con:equiv:eq} with that collection of pairs becomes:
\begin{align}\label{det:con:equiv}
\sum_{s'\in\St} \env(s') \prod_{\pol_i\in\Pol_0} \env(f_s(\pol_i)|s',\pol_i) = \sum_{s^*\in\St^*} \env^*(s^*) \prod_{\pol_i\in\Pol_0} \env^*(f_{s^*}(\pol_i)|s^*,\pol_i).
\end{align}
Since everything is deterministic, the expression $\prod_{\pol_i\in\Pol_0} \env(f_s(\pol_i)|s',\pol_i)$ must be either $0$ or $1$, and it is $1$ only if $s'\in\mathcal{F}(f_s)$.
Thus \autoref{det:con:equiv} can be further rewritten as
\begin{align*}
\sum_{s'\in\mathcal{F}(f_s)} \env(s') \mathbf{1} = \sum_{s^*\in\mathcal{F}^*(f_s)} \env^*(s^*) \mathbf{1}.
\end{align*}
This demonstrates that the probability under $\env$ of any $\mathcal{F}(f_s)$, is the same as the probability under $\env^*$ of $\mathcal{F}^*(f_s)$; so, writing $\env(\mathcal{F}(f_s))$ for $\env(s_0\in\mathcal{F}(f_s)$,
\begin{align}\label{equality:eq}
\env(\mathcal{F}(f_s))=\env^*(\mathcal{F}^*(f_s)).
\end{align}

So for all $s^*\in\mathcal{F}^*(f_s)$, with $\env^*(s^*)\neq 0$, define
\begin{align*}
p_{s^*}=\frac{\sum_{s'\in\mathcal{F}(f_s)}\env(s')p_{s'}}{\env(\mathcal{F}(f_s))}.
\end{align*}
Thus $p_{s^*}$ for $s^*\in\mathcal{F}(f_s)$ is equal to the weighted average of $p_s$ in $\mathcal{F}(f_s)\subset\St$.
For the $s^*$ with $\env(s^*)=0$, set $p_{s^*}$ to any value.
This defines the $p_{s^*}$, and hence a $P^*$ on $\mathcal{H}$ via \autoref{univ:eq}.

We now need to show that $P=P^*$. Note first that
\begin{align}\label{note:eq}
\begin{aligned}
\sum_{s^*\in\mathcal{F}^*(f_s)} \env^*(s^*)p_{s^*} &= \frac{\env^*(\mathcal{F}^*(f_s))}{\env(\mathcal{F}(f_s))} \sum_{s'\in\mathcal{F}(f_s)}\env(s')p_{s'} \\
&=\sum_{s'\in\mathcal{F}(f_s)}\env(s')p_{s'}.
\end{aligned}
\end{align}

Now let $h_t$ be a history with $t\leq m$, and $\pol$ any deterministic policy that, upon given an initial segment $h_{t'} < h_t$, will generate the action $a_{t'+1}$.
Thus $\pol$ is a policy that could allow $h_t$ to happen.

Let $\St(h_t)$ be the set of all $s\in\St$ such that $h_t \leq f_s(\pol)\in\mathcal{H}_m$.
This means that, if the agent started in $s$ and followed $\pol$, it would generate a history containing $h_t$ -- hence that it would generate $h_t$.

That set can be written as a union $\St(h_t)=\bigcup_{h_t \leq f_s(\pol)}\mathcal{F}(f_s)$.
The observation of $h_t$ is thus equivalent to $s_0\in\St(h_t)$.
Consequently
\begin{align*}
P(h_t)&=\frac{\sum_{s'\in\St(h_t)}\env(s')p_s}{\env(\St(h_t))}\\
&=\frac{\sum_{f_s: h_t \leq f_s(\pol)}\sum_{s'\in\mathcal{F}(f_s)} \ \env(s')p_s}{\sum_{f_s: h_t \leq f_s(\pol)} \ \env(\mathcal{F}(f_s))}\\
&= \frac{\sum_{f_s: h_t \leq f_s(\pol)}\sum_{s^*\in\mathcal{F}^*(f_s)} \ \env^*(s^*)p_{s^*}}{\sum_{f_s: h_t \leq f_s(\pol)} \ \env^*(\mathcal{F}(f_s))}\\
&= P^*(h_t),
\end{align*}
by \autoref{equality:eq} and \autoref{note:eq}.

\end{proof}

Thus any deterministic $m$-counterfactually equivalent environment can be used to define any pure learning process: they are all interchangeable for this purpose.


\bibliography{ref}

\begin{thebibliography}{5}
\providecommand{\natexlab}[1]{#1}
\providecommand{\url}[1]{\texttt{#1}}
\expandafter\ifx\csname urlstyle\endcsname\relax
  \providecommand{\doi}[1]{doi: #1}\else
  \providecommand{\doi}{doi: \begingroup \urlstyle{rm}\Url}\fi

\bibitem[Choi and Kim(2011)]{Choi11}
Jaedeug Choi and Kee-Eung Kim.
\newblock Inverse reinforcement learning in partially observable environments.
\newblock \emph{Journal of Machine Learning Research}, 12:\penalty0 691--730,
  2011.

\bibitem[Hadfield-Menell et~al.(2017)Hadfield-Menell, Milli, Russell, Abbeel,
  and Dragan]{hadfield2017inverse}
Dylan Hadfield-Menell, Smitha Milli, Stuart~J Russell, Pieter Abbeel, and Anca
  Dragan.
\newblock Inverse reward design.
\newblock In \emph{Advances in Neural Information Processing Systems}, pages
  6749--6758, 2017.

\bibitem[Kaelbling et~al.(1998)Kaelbling, Littman, and
  Cassandra]{kaelbling1998planning}
Leslie~Pack Kaelbling, Michael~L Littman, and Anthony~R Cassandra.
\newblock Planning and acting in partially observable stochastic domains.
\newblock \emph{Artificial intelligence}, 101\penalty0 (1):\penalty0 99--134,
  1998.

\bibitem[Pearl(2009)]{causality}
Judea Pearl.
\newblock \emph{Causality}.
\newblock Cambridge university press, 2009.

\bibitem[Sutton and Barto(1998)]{sutton1998reinforcement}
R.~Sutton and {A.G.} Barto.
\newblock \emph{Reinforcement Learning: An Introduction}.
\newblock {MIT} Press, Cambridge, {MA}, 1998.
\newblock A Bradford Book.

\end{thebibliography}

\end{document}